\pgfplotsset{compat=1.16}
\theoremstyle{plain}
\newtheorem{thm}{Theorem}
\newtheorem{lemma}[thm]{Lemma}
\theoremstyle{definition}
\newtheorem{defn}{Definition}
\newcommand{\myparagraph}{%
  \@startsection{paragraph}{4}%
  {\z@}{1ex \@plus 1ex \@minus .2ex}{-1em}%
  {\normalfont\normalsize\bfseries}%
}
\newcommand{\ik}{\kappa}
\newcommand{\im}{\mu}
\newcommand{\iu}{u}
\newcommand{\sw}{\text{sw}}
\newcommand{\marg}{\text{m}}
\DeclareMathOperator*{\rk}{rank}
\DeclareMathOperator*{\argmin}{arg\,min}
\DeclareMathOperator{\SE}{SE}
\newcommand{\sqrtvo}[1][]{\ifthenelse{\equal{#1}{}}{$\sqrt{\textit{VO}}$}{$\sqrt{\textit{VO}}$-{#1}}}
\newcommand{\sqrtvio}[1][]{\ifthenelse{\equal{#1}{}}{$\sqrt{\textit{VIO}}$}{$\sqrt{\textit{VIO}}$-{#1}}}
\newcommand{\sqvo}[1][]{\ifthenelse{\equal{#1}{}}{$\textit{VO}$}{$\textit{VO}$-{#1}}}
\newcommand{\sqvio}[1][]{\ifthenelse{\equal{#1}{}}{$\textit{VIO}$}{$\textit{VIO}$-{#1}}}
\begin{document}


\title{Square Root Marginalization for Sliding-Window Bundle Adjustment}

\author{Nikolaus Demmel \quad David Schubert \quad Christiane Sommer \quad Daniel Cremers \quad Vladyslav Usenko\\
Technical University of Munich\\
{\tt\small \{nikolaus.demmel,d.schubert,c.sommer,cremers,vlad.usenko\}@tum.de}}

\maketitle
\ificcvfinal\thispagestyle{empty}\fi

\ificcvfinal
\let\thefootnote\relax\footnote{This work was supported by the ERC Advanced Grant SIMULACRON and by the German Science Foundation Grant CR 250/20-1 ``Splitting Methods for 3D Reconstruction and SLAM''.}
\fi

\begin{abstract}
  In this paper we propose a novel square root sliding-window bundle adjustment suitable for real-time odometry applications.
  The square root formulation pervades three major aspects of our optimization-based sliding-window estimator: for bundle adjustment we eliminate landmark variables with nullspace projection;
  to store the marginalization prior we employ a matrix square root of the Hessian; and when marginalizing old poses we avoid forming normal equations and
  update the square root prior directly with a specialized QR decomposition.
  We show that the proposed square root marginalization is algebraically equivalent to the conventional use of Schur complement (SC) on the Hessian.
  Moreover, it elegantly deals with rank-deficient Jacobians producing a prior equivalent to SC with Moore–Penrose inverse.
  Our evaluation of visual and visual-inertial odometry on real-world datasets demonstrates that the proposed estimator is 36\% faster than the baseline.
  It furthermore shows that in single precision, conventional Hessian-based marginalization leads to numeric failures and reduced accuracy.
  We analyse numeric properties of the
  marginalization prior to explain why our square root form does not suffer from the same effect and therefore entails superior performance. 
\end{abstract}

\section{Introduction}

\begin{figure}[t]
\begin{center}
   \includegraphics[trim=0 0.2em 0 3.3em,clip,width=0.99\linewidth]{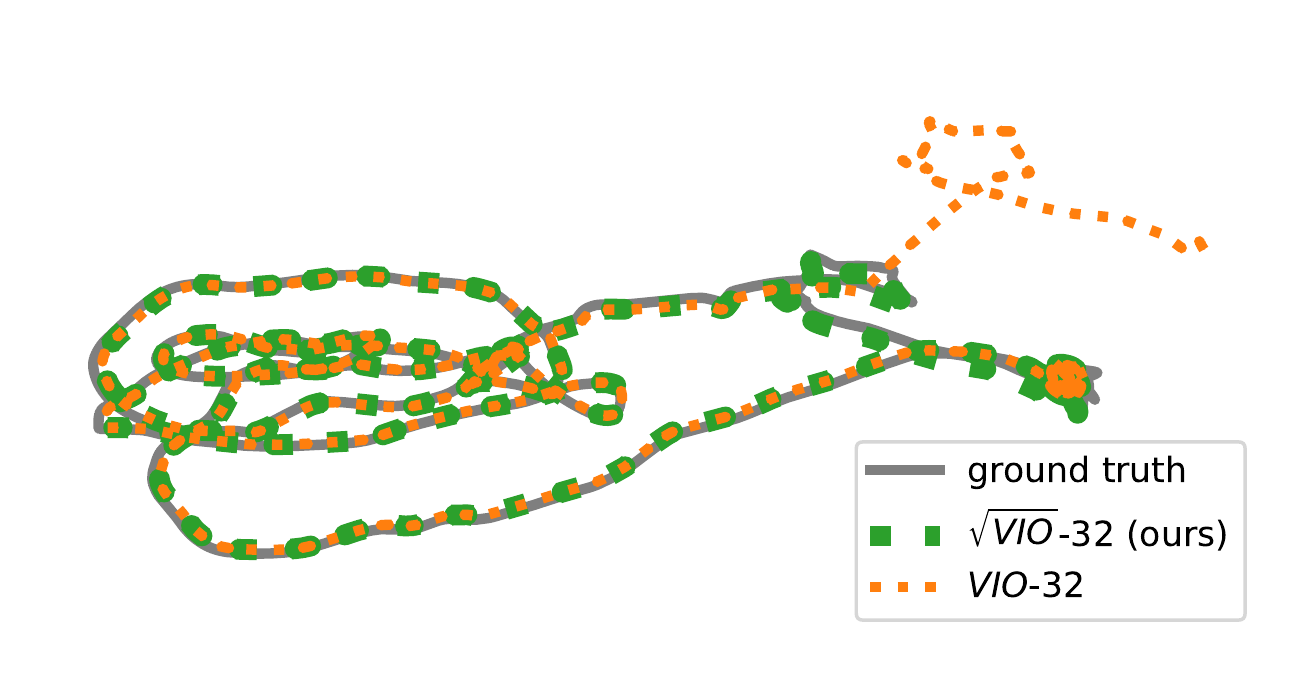}\\
   \vspace{-0.9em}
   \includegraphics[trim=0 1.3em 0 0.9em,clip,width=0.99\linewidth]{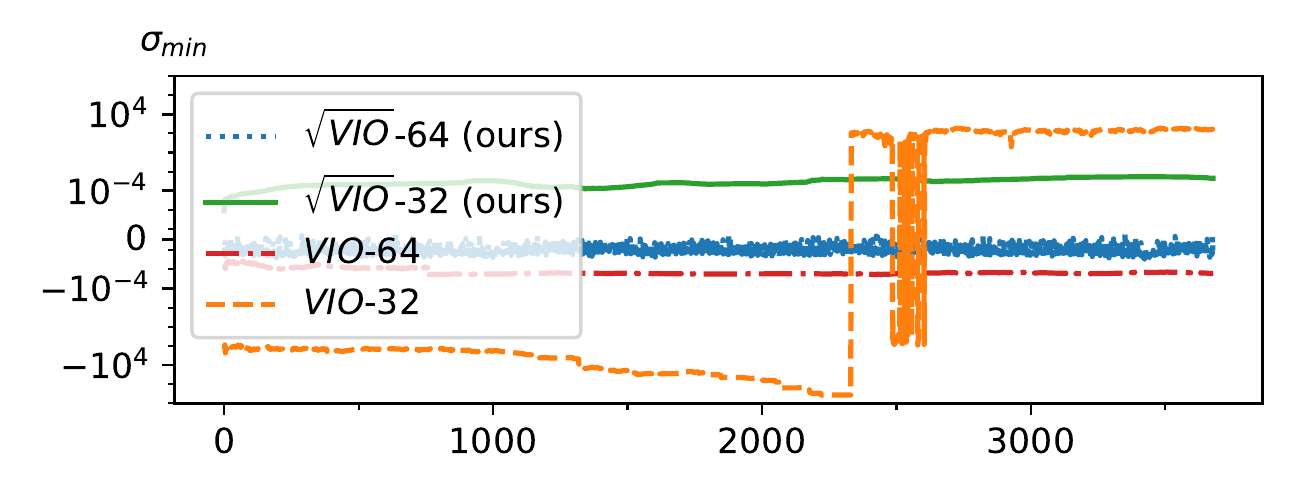}
\end{center}
\caption{\emph{Top:} estimated visual-inertial odometry trajectories on the \emph{eurocMH01} sequence. 
The conventional baseline \sqvio[64] works well with double-precision floats, but fails in single precision (\sqvio[32]). 
In contrast, the proposed square root estimator \sqrtvio[32] even in single precision retains the same accuracy. It processes the whole sequence in $26\,\mathrm{s}$ on a desktop CPU (7.1x faster than real time).
\emph{Bottom:} smallest eigenvalue $\sigma_{\min}$ of the marginalization prior Hessian $H_\marg$ evolving over time (linear y-axis for  $|\sigma_{\min}| < 10^{-8}$, logarithmic elsewhere).
We expect values close to zero (positive semi-definite Hessian with gauge freedom). 
While the conventional (squared) formulation in single precision leads to negative eigenvalues with large magnitude, accumulating error, and (ultimately) numerical failure, the proposed square root approach has $\sigma_{\min}$ of bounded magnitude and remains stable.
}
\label{fig:teaser}
\end{figure}

Visual odometry has been a key component in environment mapping, robot navigation and autonomous systems for a long time.
With low-cost devices, such as smartphones or robot vacuum cleaners becoming increasingly prevalent in our daily lives, we see a growing  need to solve odometry problems in a fast and robust manner. In addition, scalable solutions on specialized hardware require algorithms to run with limited floating point precision.

To keep the system size bounded with a fixed number of state variables over time, marginalization is a commonly used technique,
where the remaining sub-problem can be interpreted in terms of a marginal distribution with the same solution as before.
However, for the implementation of the marginalization prior as well as the solution of the associated optimization problem, there are multiple options.
The Schur complement technique is an easy-to-implement choice that many state-of-the-art odometry and SLAM systems employ, but it relies on the Hessian matrix of the linearized system.
While this is not a problem in many applications, the fact that the square of the Jacobian and thus a squared condition number are involved in SC may lead to numerical instabilities.
In Kalman filter literature, this has often been addressed by using a square root filter approach \cite{wu2015square}. Moreover, it has recently been shown that even in an optimization-based bundle adjustment setting, matrix square roots can be exploited to increase numerical stability \cite{demmel2021square}.
Following up on these findings, we propose  an optimization-based formulation of visual (inertial) odometry that uses matrix square roots in the optimization stage as well as to store and update the marginalization prior.
We dub the two flavors of our method \sqrtvo{} and \sqrtvio{} for purely visual and visual-inertial odometry, respectively.
The contributions of our paper are as follows:

\begin{itemize}[noitemsep, topsep=-0.3em]

    \item We propose a novel square root formulation for optimization-based sliding-window estimators. 
    
    \item One of the components of this square root formulation is a specialized QR decomposition and we prove the analytical equivalence with marginalization based on SC and Cholesky decomposition.

    \item Our QR marginalization naturally includes the case of rank-deficient Jacobians.

    \item
    The proposed square root formulation enables optimization-based VO and VIO with single-precision floating point computations without loss of accuracy.
    
    \item On several real-world datasets, we systematically analyze the effect of using nullspace projection combined with our specialized QR decomposition on runtime and accuracy.  The proposed square root estimator is $23\%$ faster than the baseline and $36\%$ faster when additionally switching to single precision.
    
    \item We release our implementation as open source:\\
    \url{https://go.vision.in.tum.de/rootvo}.
\end{itemize}

\section{Related work}

In what follows, we review relevant literature in visual and visual-inertial odometry, discuss filter-based and optimization-based techniques, and highlight works dealing with rank-deficient Jacobians.

\myparagraph{Filter-based state estimation for VIO}
More than 60 years ago, Kalman proposed a filter~\cite{kalman1960new} that set the basis for state estimation in many different applications, visual-inertial odometry being one of them.
Many variations of the Kalman filter have been developed, introducing different improvements to the original, where one prominent example for VIO is the MSCKF \cite{mourikis2007}.
Square root filters are able to improve numerical stability and let the system run on single-precision hardware~\cite{maybeck1982stochastic, bierman2006factorization, dellaert2006square, wu2015square} and in particular information filters are closely related to optimization-based approaches \cite{strasdat2012why}. 
As such, the proposed approach shares many of these advantages with the square root inverse variant of the MSCKF \cite{wu2015square}, but ours can relinearize residuals until they leave the sliding window and accommodate rank-deficient marginalization priors. 
Yang et al.\ were the first to prove the equivalence of Schur complement and nullspace marginalization (a technique using matrix square roots) from a Kalman filter perspective~\cite{yang2017null}.

\myparagraph{Optimization-based visual (inertial) odometry systems}
Optimization-based approaches have become the state-of-the-art in visual and visual-inertal odometry. Usually, they are implemented as a fixed-lag smoother, where old or redundant state variables are removed from the optimization window in order to keep the system real-time capable~\cite{sibley2010planetary,leutenegger2015keyframe,engel2017direct,qin18vins,usenko2019visual}. 
All mentioned approaches rely on marginalization using the Schur complement, which means the marginalization energy is stored as a quadratic function of the remaining variables. Recent work, however, has shown that it can be beneficial to perform marginalization in square root form~\cite{demmel2021square}, both numerically and in terms of runtime. While \cite{demmel2021square} only shows this for batch optimization with temporary landmark marginalization, we introduce an efficient way to permanently marginalize frame variables.

\myparagraph{Rank-deficient Jacobians}
Most works on bundle adjustment and odometry systems tacitly assume full-rank Jacobians and information matrices, but this may not always be the case.
Generalized inverses, generalized Schur complements and rank-deficient information matrices have been analyzed for years outside the BA or SLAM context~\cite{rohde1965generalized, marsaglia1974rank, burns1974generalized, carlson1974generalization, ouellette1981schur, pukelsheim1990information, liu2010generalized}.
Only few works propose ways to deal with rank-deficiency in SLAM: Mazuran et al.~\cite{mazuran2014nonlinear, mazuran2016nonlinear} introduce an orthogonal projection of all involved matrices that removes the Jacobian degeneracy and
Leutenegger et al.~\cite{leutenegger2015keyframe} use the pseudo-inverse for landmark marginalization in case the Hessian is singular.

\section{Sliding-window bundle adjustment}

\subsection{Problem statement}

\subsubsection{Batch optimization problem}

The goal of odometry is to estimate the system state from a set of visual, inertial, or other measurements. The state $x$ can consist of poses, velocities, biases at different moments of time (all referred to as \emph{frame variables} hereafter), and landmark positions. In this paper we consider visual odometry (VO) and visual-inertial odometry (VIO), but the theory described here is general and can be applied to other sensors (GPS, LIDAR).
If visual measurements are involved, landmark positions are part of the optimization problem, and eventually a bundle adjustment-type problem is solved.

To estimate the optimal state $x$ we can solve the non-linear least squares optimization problem
\begin{align}
\label{eq:e_tot}
    E(x) = \tfrac{1}{2}\|W^{1/2} ~ \hat{r}(x)\|^2 = \tfrac{1}{2}\|r(x)\|^2 \,
\end{align}
where $\hat{r}(x)$ is a vector function of stacked residuals and $W$ is the weighting matrix related to the measurement uncertainty. To simplify notation we absorb $W$ into the weighted residual vector $r(x)$. 
Our implementation is based on \emph{Basalt}~\cite{usenko2019visual} and we follow their residual formulation for the vision and IMU terms.
\eqref{eq:e_tot} is an incrementally evolving batch optimization problem that grows in size and includes measurements and variables up to the current time.

\subsubsection{Sliding-window energy}

To keep the system real-time capable, the optimization is constrained to a small window of recent states and residuals. When new states are added, old ones are removed through marginalization, and we
keep the information from residuals that depend on removed variables in an additional energy term $E_\marg(x)$. Together with the squared vector of active residuals $r_\text{a}$, this forms the sliding-window energy
\begin{equation}
\label{eq:e_sw}
    E_\sw(x) = \tfrac{1}{2}\Vert r_\text{a}(x) \Vert^2 + E_\marg(x)\,.
\end{equation}
From here on, $x$ contains only those variables which are active in the current window.
By repeating this process of marginalizing variables when new measurements and states are added, we can keep the problem size fixed.
We refer to this type of marginalization, which allows for keeping the problem size small, as \emph{permanent} marginalization, since the marginalized variables are not reintroduced to the problem.
Using permanent marginalization, we describe how \eqref{eq:e_sw} transforms from one point in time to the next in Sec.~\ref{sec:marginalization}.

\subsubsection{Hessian versus square root form}
Most commonly, the marginalization energy is stored as a quadratic form of the active optimization variables:
\begin{align}
\label{eq:emarghess}
    E_\marg(x) = \tfrac{1}{2}(x - x^0)^\top H_\marg (x - x^0) + b_\marg^{\top}(x - x^0)\,,
\end{align}
where $x^0$ is the linearization point. Note, that strictly speaking $x$ here only denotes a subset of the variables 
and $H_\marg$ and $b_\marg$ are of according size (see Sec.~\ref{sec:marginalization} for details).

It is also possible to store this energy in square root form:
\begin{align}
\label{eq:emargsqrt}
    E_\marg(x) = \tfrac{1}{2}\|r_\marg + J_\marg (x - x^0)\|^2 \,.
\end{align}
Up to an additive constant, this is the same as \eqref{eq:emarghess} and we can relate the two representations by
\begin{align}
H_\marg = J_\marg^\top J_\marg\,,\qquad b_\marg = J_\marg^\top r_\marg\,.
\end{align}
Note, that for a given $H_\marg$ and $b_\marg$, $J_\marg$ and $r_\marg$ are not unique.

Shifting the expansion point by $\delta x$ away from $x^0$ means updating $r_\marg$ by $r'_\marg = r_\marg + J_\marg \delta x$, while $J'_\marg=J_\marg$. The equivalent shift in \eqref{eq:emarghess} can be done using
\begin{align}
\label{eq:shift}
    b'_\marg = b_\marg + H_\marg\delta x\,,\quad H'_\marg = H_\marg\,.
\end{align}

\subsubsection{Optimization}
\label{sec:optimization}

To obtain a state estimate, we minimize the energy in \eqref{eq:e_sw},
\begin{align}
    \hat{x} = \argmin_x E_\sw(x)\,,
\end{align}
using Levenberg-Marquardt optimization. To do this efficiently, we exploit the sparsity of the landmark Jacobian:
as is usual in bundle adjustment, we use \emph{temporary} marginalization of all landmarks to significantly reduce the problem size, then solve the reduced camera system, and finally perform back substitution to recover the landmark positions.

\subsection{Marginalization}
\label{sec:marginalization}

Once the energy in \eqref{eq:e_sw} has been optimized for the current window, and before a new frame is added, variables are chosen for marginalization.
We now derive how $E_\marg$ is calculated for the upcoming window of the next time step with the traditional Hessian-based approach, followed by the proposed equivalent square root formulation in Sec.~\ref{sec:sqrtmarg}.

\subsubsection{Linearization}

Using linearized residuals, we can approximate the energy in \eqref{eq:e_sw} in terms of a perturbation $\Delta x$ from the current (optimal) state estimate $x$ as
\begin{align}
    E_\text{lin}(\Delta x) = \tfrac{1}{2}\Vert r + J\Delta x \Vert^2\,,\\
    r = \begin{pmatrix}r_\text{a}(x)\\r_\marg + J_\marg(x-x^0)\end{pmatrix}\,,
\end{align}
where $x^0$ is the linearization point of the old marginalization prior.
The residual vector $r$ contains the active residuals $r_\text{a}$ and old marginalization residuals shifted to the current state estimate $x$. The Jacobian $J$ contains the Jacobian of $r_\text{a}$ and the marginalization Jacobian $J_\marg$.
This constitutes a system of linear equations, with corresponding normal equations
\begin{align}
\label{eq:normal_eq}
    H \Delta x &= - b\,,
\end{align}
where $H=J^\top J$ and $b=J^\top r$.
We now define a set of variables $x_\im$ that we want to marginalize out, i.e., 
we want the energy \eqref{eq:e_sw} in the next window to not depend on $x_\im$ anymore.
Let $x_\ik$ be the states which share residuals with states in $x_\im$ (or have a prior), and $x_\iu$ those that are not directly connected to $x_\im$.
With that, we can rewrite $H$ and $b$ as
\begin{align}
\label{eq:hessian}
    H &= \begin{pmatrix}
    H_{\im\im} & H_{\im\ik} & 0 \\
    H_{\ik\im} & H_{\ik\ik}^\im+H_{\ik\ik}^{\bar{\im}} & H_{\ik\iu} \\
    0 & H_{\iu\ik} & H_{\iu\iu}
    \end{pmatrix}\,,
    \\
    b &= \begin{pmatrix}
    b_\im \\
    b_\ik^\im + b_\ik^{\bar{\im}} \\
    b_\iu
    \end{pmatrix}\,.
\end{align}
The Hessian block $H_{\ik\ik}$ is split into one part $H_{\ik\ik}^\im$ containing the Jacobians of residuals that depend on $\im$-variables,
and $H_{\ik\ik}^{\bar{\im}}$ that contains all others. The same holds for $b_\ik$.

\begin{figure*}
    \centering
    
    \input{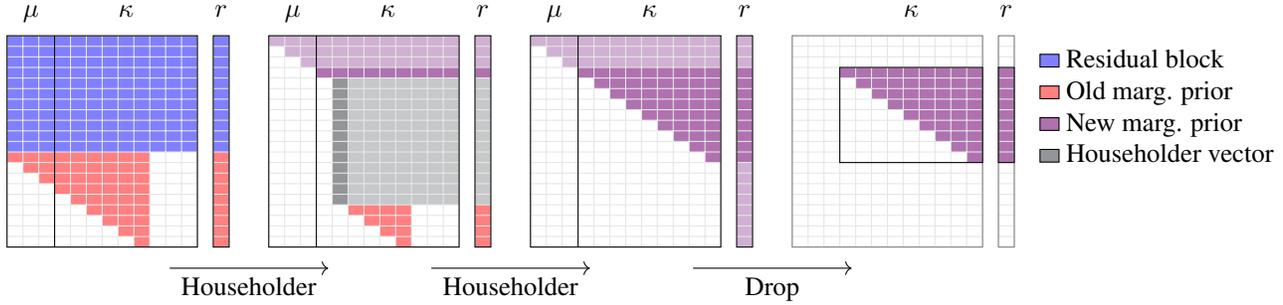}
    
    \caption{QR marginalization of frame variables. Initially, Jacobian and residual vector (after landmark marginalization) consist of residuals that depend on $\im$-variables and were active up to now and a marginalization prior (\emph{left}). By successively applying Householder transformations in-place, the matrix is transformed into an upper triangular matrix (\emph{flat}, semi-triangular in the rank-deficient case). Hereby, in each iteration, all elements except for the topmost element of the Householder vector vanish. To marginalize out the $\im$-variables, we drop the corresponding columns and the rows where these columns are non-zero, as well as zero rows. This results in a compact new marginalization prior (\emph{right}). Note that the old marginalization prior on the left depicts the case where always the oldest variables are marginalized. In practice, in order to have the $\mu$-columns on the left, variables may have to be reordered.}
    \label{fig:marginalization}
\end{figure*}

\subsubsection{Schur complement}
\label{sec:sc}

It can be shown by multiplication of the first line of \eqref{eq:hessian} with $H_{\ik\im}H_{\im\im}^{-1}$ that solving \eqref{eq:normal_eq} w.r.t.\ $x_\ik$ and $x_\iu$ is equivalent to solving the reduced system
\begin{equation}
\label{eq:reduced_system}
    \begin{pmatrix}
    \tilde{H} + H_{\ik\ik}^{\bar{\im}} & H_{\ik\iu} \\
    H_{\iu\ik} & H_{\iu\iu}
    \end{pmatrix}
    \begin{pmatrix}
    \Delta x_\ik \\
    \Delta x_\iu
    \end{pmatrix}
    =-
    \begin{pmatrix}
    \tilde{b} + b_{\ik}^{\bar{\im}} \\
    b_{\iu}
    \end{pmatrix}\,,
\end{equation}
with
\begin{align}
\label{eq:hmarg}
    \tilde{H} &= H_{\ik\ik}^\im - H_{\ik\im}H_{\im\im}^{-1}H_{\im\ik}\,,\\
\label{eq:bmarg}
    \tilde{b} &= b_{\ik}^\im - H_{\ik\im}H_{\im\im}^{-1}b_{\im}\,.
\end{align}
$\tilde{H}$ is called the \emph{Schur complement} of $H_{\im\im}$.
$\tilde{H}$ and $\tilde{b}$ only involve terms that depend on residuals containing $\im$-variables.
Thus, they do not change if new residuals depending on $\ik$- or $\iu$ variables are added to the energy.
In fact, even then, the solution of the reduced system will still be the same as if the full system including new residuals was solved.

$\tilde{H}$ and $\tilde{b}$ have been calculated using a linearization at the current state estimate $x$. In order to keep the system consistent, the linearization point $x_\ik^0$ of the $\ik$-variables may not be changed after computing $\tilde{H}$ and $\tilde{b}$ for a given $x^0$.
These so-called first-estimates Jacobians \cite{Guoquan09fej} prevent the destruction of nullspaces and are commonly used \cite{leutenegger2015keyframe, engel2017direct, usenko2019visual}.
Thus, to write the new marginalization energy as in~\eqref{eq:emarghess}, we use~\eqref{eq:shift} to obtain $H_\marg = \tilde{H}$ and $b_\marg = \tilde{b} - \tilde{H}(x_\ik-x_\ik^0)$.

In practice, $H_\marg$ and $b_\marg$ are computed once after the optimization of the old window has converged, using only those energy terms that depend on $\im$-variables (always including the old prior). The result constitutes the marginalizaiton prior used in the new window.
To preserve sparsity in the landmark-landmark Hessian block, we drop observations of landmarks that will stay active in frames which are about to be marginalized, before calculating $\tilde{H}$ and $\tilde{b}$. This means that landmarks are never part of the $\ik$-variables.

\section{Square root marginalization}
\label{sec:sqrtmarg}

Marginalization as presented in Sec.~\ref{sec:marginalization} is a very elegant way to keep the system size small while continuously adding new residual terms.
However, the implementation using the Schur complement has some drawbacks when it comes to numerical stability, e.g., the condition number of the Hessian being squared compared to the Jacobian.
In~\cite{demmel2021square} it was shown that a square root formulation for temporary marginalization of variables can be beneficial for speed, accuracy and numerical stability.
We now apply similar ideas to marginalization in sliding-window bundle adjustment.

\subsection{Landmark marginalization}

One option would be to permanently marginalize landmark and frame variables in one step.
However, we know from \cite{demmel2021square} that we can use nullspace projection to exploit the special sparsity structure of the Jacobian for landmark marginalization. On the other hand, nullspace projection is not optimal for permanent marginalization of frame variables (see below). 
We therefore choose a two-step procedure: first, we marginalize landmarks in $x_\im$ by projecting the Jacobians onto the nullspace $Q_2$ of the landmark Jacobian using QR decomposition as in~\cite{demmel2021square}.
In a second step we marginalize the frame variables in $x_\im$, taking the projected Jacobians $Q_2^\top J$ and residuals $Q_2^\top r$ as input.

\subsection{Frame variable marginalization}

\begin{figure}
    \centering
    \input{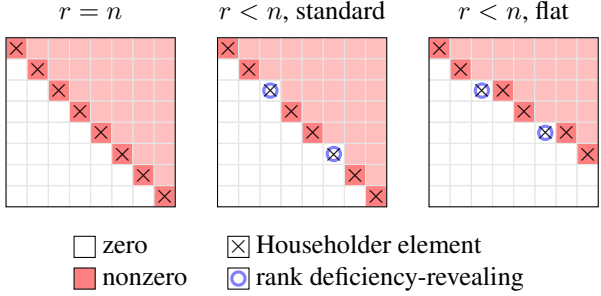}
    \caption{Our specialized flat QR decomposition. \emph{Left}: for a full-rank matrix, QR decomposition results in an upper triangular matrix $R$ with non-zero elements on the diagonal. \emph{Middle}: if the rank $r$ is smaller than the maximum possible rank $n$, standard Householder QR
    results in zero elements on the diagonal. \emph{Right}: with flat QR, when a zero diagonal element occurs, the Householder element for the next column will be in the same row.}
    \label{fig:rank_qr}
\end{figure}

In order to obtain a marginalization prior in square root form, we could proceed similar to~\cite{demmel2021square} and QR-decompose the Jacobian $J_\im$.
However, we want to keep the size of the marginalization prior as small as possible and we aim for a more general solution including the case where $J_\im$ (or $J$) does not have full rank.
To achieve this, we decompose the Jacobian $\begin{pmatrix}J_\im & J_\ik\end{pmatrix} = J = QR$ with a specialized QR algorithm that uses Householder reflections and is rank-revealing without any pivoting. Fig.~\ref{fig:marginalization} illustrates the whole procedure in the full-rank case.

Standard Householder QR~\cite[p. 248]{golub13} zeros the entries below the Householder element on the diagonal, column by column.
This may lead to a ``step'' of height larger than 1 in the resulting $R$ when that column is not linearly independent of the ones before and the Householder element remains zero (illustrated in Fig.~\ref{fig:rank_qr}, \emph{middle}).
In our specialized QR, if this occurs at matrix element $j_{ik}$, we proceed to the next column $k+1$ and instead of zeroing everything below element $j_{i+1,k+1}$ like in standard QR, we keep the row index at $i$ and zero everything below $j_{i,k+1}$ (see Fig.~\ref{fig:rank_qr}, \emph{right}).
Thus, we never get ``steps'' higher than one row.

The result is a valid QR decomposition with two appealing properties: when $J$ has full rank, it is equivalent to standard Householder QR. In addition, our QR algorithm is rank-revealing without any pivoting, the rank $r$ of $J$ being the number of non-zero rows in $R$.
The rank $r_\im$ of $J_\im$ is the number of non-zero rows in $Q^\top J_\im$, i.e., the first $n_\im$ columns in $Q^\top J$.
Our $R$ is in general more \emph{flat} than that obtained by other pivoting-free QR algorithms.

For marginalization, we take $R$ and drop the first $r_\im$ rows, the first $n_\im$ columns, and all zero rows at the bottom.
The remaining matrix, which we call $\tilde{R}$ in the following, is a square root of the marginalization Hessian $\tilde{H}$, as will be shown in the following subsections.
It has the same number $n_\ik$ of columns as $\tilde{H}$, while its number of rows may be smaller and is equal to $\rk(\tilde{H})$.
Thus, we also fulfil the need for small prior size, and can directly read off the rank of the prior, which is not as easy for SC-based marginalization.
The definition of the new marginalization energy as in \eqref{eq:emargsqrt} is now given by
    $J_\marg = \tilde{R}$ and $r_\marg = \tilde{r} - \tilde{R}(x-x^0)$,
where $\tilde{r}$ contains those elements of $Q^\top r$ whose according rows were not dropped in $R$ and $x$ and $x^0$ already have marginalized variables removed.

\subsection{Equivalence to SC-based marginalization}

We will only show the equivalence of our flat QR decomposition for permanent marginalization to that of Schur complement, since a proof for the equivalence of nullspace marginalization for temporary marginalization can be found in~\cite{demmel2021square}. We concentrate on the equivalence of $\tilde{R}, \tilde{r}$ in square root form and $\tilde{H}, \tilde{b}$ in Hessian form. The shifts on $\tilde{r}$ and $\tilde{b}$ in order to obtain $r_\marg$ and $b_\marg$ are then equivalent by \eqref{eq:shift}.

\subsubsection{Full-rank Jacobian}

Since $Q$ in $J=QR$ is an orthogonal matrix, $H=J^\top J=R^\top R$, and $R$ is a square root of the full Hessian.
We can write $R$ as a block matrix and define an orthogonally transformed residual vector $r'=Q^\top r$ such that $b=R^\top r'$:
\begin{align}
\label{eq:Jr}
    \begin{pmatrix}
    R_{1\im} & R_{1\ik} \\
    0 & R_{2\ik}
    \end{pmatrix}\,,\quad
    \begin{pmatrix}
    r'_1 \\
    r'_2
    \end{pmatrix}\,.
\end{align}
When $J$ is a full-rank matrix, $R_{1\im}$ and $R_{2\ik}$ are upper triangular matrices of sizes $n_\im\times n_\im$ and $(N-n_\im)\times n_\ik$ with a number $N$ of residuals (after landmark nullspace marginalization).
The upper triangular property implies that the bottom $N-n_\ik$ rows of $R_{2\ik}$ are all zero, so
\begin{equation}
\label{eq:r_marg}
    R_{2\ik} = \begin{pmatrix}
    \tilde{R} \\ 0
    \end{pmatrix}\,.
\end{equation}
Note, that a matrix $R$ with the same properties (but in general not equal) can also be obtained by LLT or LDLT decomposition of the Hessian sub-blocks related to $\{\im,\ik\}$, by setting $\tilde{R}=L^\top$ or $\tilde{R}=D^\frac{1}{2}L^\top$.
So we can even store the marginalization prior in square root form after computing the Hessian, without performing any QR decomposition.
We now show that marginalization with Schur complement is equivalent to keeping $R_{2\ik}^\top$ and $r'_2$, and dropping the remaining components of $R$ and $r'$.
Using \eqref{eq:Jr}, we obtain
\begin{align}
    \begin{split}
    H_{\im\im} &= R_{1\im}^\top R_{1\im}\,,\\
    H_{\im\ik} &= R_{1\im}^\top R_{1\ik}\,,\\
    b_\im &= R_{1\im}^\top r'_1\,,
    \end{split}
    \begin{split}
    H_{\ik\ik} &= R_{1\ik}^\top R_{1\ik} + R_{2\ik}^\top R_{2\ik}\,,\\
    H_{\ik\im} &= R_{1\ik}^\top R_{1\im}\,,\\
    b_\ik &= R_{1\ik}^\top r'_1 + R_{2\ik}^\top r'_2\,,
    \end{split}
\end{align}
where from now, $H_{\ik\ik}$ and $b_{\ik}$ denote $H_{\ik\ik}^\im$ and $b_{\ik}^\im$.
Using these expressions and the assumption that $R_{1\im}$ is invertible, \eqref{eq:hmarg} and \eqref{eq:bmarg} can be transformed as
\begin{align}
\begin{split}
    \tilde{H} &= R_{1\ik}^\top R_{1\ik} + R_{2\ik}^\top R_{2\ik}\\ &\quad- R_{1\ik}^\top R_{1\im}(R_{1\im}^\top R_{1\im})^{-1}R_{1\im}^\top R_{1\ik}\\
    &= R_{1\ik}^\top R_{1\ik} + R_{2\ik}^\top R_{2\ik} - R_{1\ik}^\top R_{1\ik}\\
    &= R_{2\ik}^\top R_{2\ik} = \tilde{R}^\top\tilde{R}\,,
\end{split}
\end{align}
and
\begin{align}
\begin{split}
    \tilde{b} &= R_{1\ik}^\top r'_1 + R_{2\ik}^\top r'_2 - R_{1\ik}^\top R_{1\im}(R_{1\im}^\top R_{1\im})^{-1} R_{1\im}^\top r'_1\\
    &= R_{1\ik}^\top r'_1 + R_{2\ik}^\top r'_2 - R_{1\ik}^\top r'_1\\
    &= R_{2\ik}^\top r'_2 = \tilde{R}^\top\tilde{r}\,.
\end{split}
\end{align}
The last equality in both equations is due to the fact that $R_{2\ik}$ is empty in the lower part, see~\eqref{eq:r_marg}.

\subsubsection{Rank-deficient Jacobian}
\label{sec:rank_deficient_jac}

Let us now assume that the Jacobian $J$ of our problem, and thus the Hessian $H$, are rank-deficient, i.e., their rank is smaller than the total number of parameters we optimize.
This can, for example, happen when the system is not properly constrained by an initial prior or when the prior becomes smaller than the numeric noise during operation.

We perform our rank-revealing, pivoting-free QR decomposition to compute an orthogonal $Q=\begin{pmatrix}Q_1&Q_2\end{pmatrix}$ with $Q_1\in\mathbb{R}^{N\times r_\im}$, $Q_2\in\mathbb{R}^{N\times (N-r_\im)}$ and $r_\im=\rk(J_\im)$ such that the matrix
\begin{equation}
    R = \begin{pmatrix}
    R_{1\im} & R_{1\ik} \\
    0 & R_{2\ik}
    \end{pmatrix}
    =
    \begin{pmatrix}
    Q_1^\top \\ Q_2^\top
    \end{pmatrix}
    \begin{pmatrix}
    J_\im & J_\ik
    \end{pmatrix}
\end{equation}
has a zero block on the lower left and $R_{1\im}\in\mathbb{R}^{r_\im\times n_\im}$.
Marginalization is done in the same way as for the full-rank case, i.e., we drop $R_{1\im}$ and $R_{1\ik}$, and only keep $R_{2\ik}$, minus all zero rows at the bottom, ending up with a matrix $\tilde{R}$.
The marginalization Hessian obtained this way, $\tilde{R}^\top \tilde{R}$, is the same as a pseudo-Schur complement, where in \eqref{eq:hmarg},~\eqref{eq:bmarg}, the inverse is replaced by a pseudo-inverse:
\begin{align}
    \label{eq:generalized_schur}
    \tilde{H} &= H_{\ik\ik} - H_{\ik\im}H_{\im\im}^+H_{\im\ik}\,,\\
    \label{eq:generalized_b}
    \tilde{b} &= b_{\ik\ik} - H_{\ik\im}H_{\im\im}^+b_{\im}\,.
\end{align}
To show this equivalence, we first rewrite
\begin{equation}
\begin{aligned}
    \tilde{R}^\top \tilde{R} &= R_{2\ik}^\top R_{2\ik} = J_\ik^\top Q_2Q_2^\top J_\ik \\
    &= J_\ik^\top J_\ik - J_\ik^\top Q_1Q_1^\top J_\ik \\
    &= H_{\ik\ik} - J_\ik^\top Q_1Q_1^\top J_\ik\,.
\end{aligned}
\end{equation}
On the other hand, for the pseudo-Schur complement, we can use the compact SVD decomposition $J_\im=U_1D_1V_1^\top$ (i.e., $D_1$ is of size $r_\im\times r_\im$, see Appendix) to obtain
\begin{equation}
\begin{aligned}
\label{eq:pseudomarg}
    &H_{\ik\ik} - H_{\ik\im}H_{\im\im}^+ H_{\im\ik}
      = H_{\ik\ik} - J_{\ik}^\top J_{\im} (J_\im^\top J_\im)^+ J_\im^\top J_{\ik} \\
    & = H_{\ik\ik} - J_{\ik}^\top (U_1D_1V_1^\top) (V_1D_1^{-2}V_1^\top) (V_1D_1U_1^\top) J_{\ik} \\
    & = H_{\ik\ik} - J_{\ik}^\top U_1U_1^\top J_{\ik}
      = H_{\ik\ik} - J_\ik^\top Q_1Q_1^\top J_\ik\,,
\end{aligned}
\end{equation}
and similarly for $\tilde{b}$. The last equality is due to the fact that $Q_1$ and $U_1$ span the same $r_\im$-dimensional subspace of $\mathbb{R}^N$, namely the space spanned by the columns of $J_\im$ (see Appendix).
Thus, using our flat QR decomposition, we obtain the same solution as by using pseudo-SC, but without the need for computing the pseudo-inverse.

\begin{figure*}[t]
\begin{center}
   \includegraphics[width=0.99\linewidth]{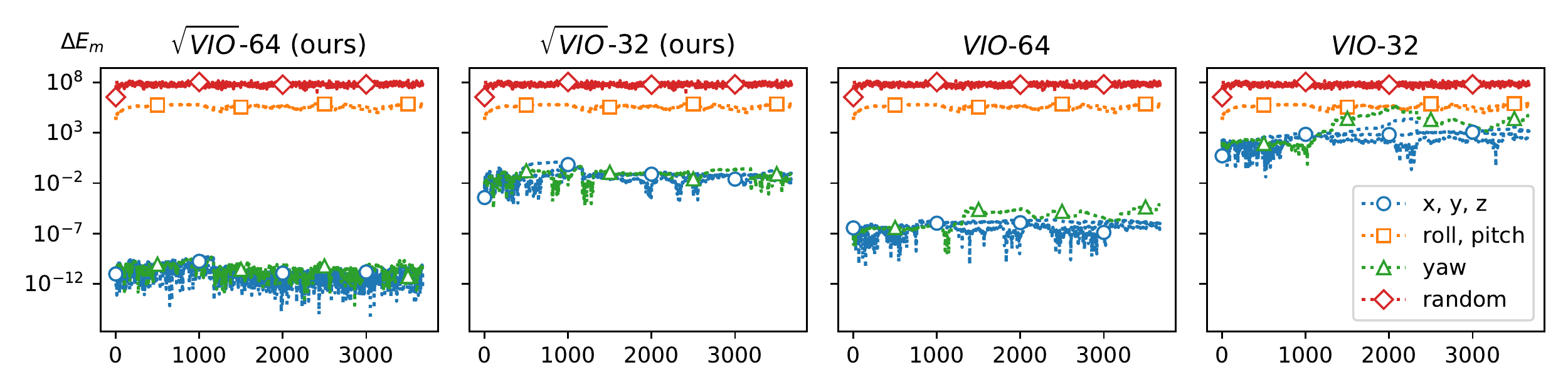}
\end{center}
\caption{In visual-inertial odometry we expect 4 degrees of gauge freedom.
To confirm this for the marginalized residuals, we investigate the marginalization prior cost change when perturbing the linearization point by a global translation in x, y, or z, by a global (linearized) rotation roll, pitch, or yaw, or by a random unit norm vector. 
While our square root marginalization leads to a consistent prior with expected nullspaces for both single and double precision, in the conventional squared form accumulating error leads to inconsistency. 
At around frame 2000 it appears that yaw is erroneously as observable as roll and pitch for \sqvio[32].}
\label{fig:ns}
\end{figure*}

\subsubsection{Interpreting the generalized Schur complement}
\label{sec:rank_deficiency}

We adopt the commonly used generalization of the Schur complement technique that simply replaces matrix inverses by pseudo-inverses in the case where the Jacobian is not of full rank~\cite{leutenegger2015keyframe, mazuran2016nonlinear}.
In the following, we argue why this is a good idea:
first, we define $\Delta x_{\text{tot}}$ as the solution we get by computing $-H^+b$,
and let $\Delta x_{\text{red}}$ be the solution that we obtain by using generalized $\tilde{H}$ and $\tilde{b}$ as in \eqref{eq:generalized_schur},~\eqref{eq:generalized_b} and solving the reduced system \eqref{eq:reduced_system} with pseudo-inverse, (possibly) followed by back substitution for $\Delta x_\im$.
Then we can use Lemma 2.3 from~\cite{liu2010generalized} to find that if
\begin{equation}
\label{eq:lemma23}
    \rk(J_\im) + \rk\begin{pmatrix}J_\ik & J_\iu\end{pmatrix} = \rk(J)\,,
\end{equation}
i.e., if the Jacobian sub-blocks corresponding to $\im$ and $\{\ik,\iu\}$ are not coupled by linearly dependent columns,
the two solutions are the same, see Appendix for a proof.

Of all possible $\Delta x$ that satisfy $H\Delta x=-b$, the solution $\Delta x_{\text{tot}}$ obtained via $H^+$ is the one with smallest norm $\Vert \Delta x\Vert$.
In practice, this is usually the preferred solution, as we don't want the system to drift in unobservable directions.

In case \eqref{eq:lemma23} does not hold, e.g.\ in the presence of absolute pose ambiguity, additional residuals may remove the coupled rank deficiency, and the pseudo-Schur complement approach yields the same solution for the reduced system as if a minimum-norm solution for the whole system including the new energy terms was calculated.

Note that while the property of a minimum-norm solution is very appealing, the computation of pseudo-inverses often uses SVD decomposition, which may be slow.
Faster matrix decomposition techniques that can deal with singular matrices, e.g.\ LU or LDLT, may output solutions $\Delta x$ with $\Vert\Delta x\Vert > \Vert\Delta x_{\text{tot}}\Vert$.
Our flat QR combines the advantages of both: it produces the same minimum-norm solution as generalized SC \emph{and} can be efficiently implemented.

\begin{table}
\footnotesize
\setlength{\tabcolsep}{0.3em}
\begin{center}
\begin{tabular}[t]{lcccc}%
\toprule
&\sqrtvio[64]&\sqrtvio[32]&\sqvio[64]&\sqvio[32]\\%
\midrule%
eurocMH01&\textbf{0.093}&\textbf{0.093}&\textbf{0.093}&\textit{0.991}\\%
eurocMH02&\textbf{0.048}&\textbf{0.048}&\textbf{0.048}&\textbf{0.048}\\%
eurocMH03&\textbf{0.051}&\textbf{0.051}&\textbf{0.051}&x\\%
eurocMH04&\textbf{0.109}&\textbf{0.109}&\textbf{0.109}&x\\%
eurocMH05&\textbf{0.137}&\textbf{0.137}&\textbf{0.137}&x\\%
eurocV101&\textbf{0.043}&\textbf{0.043}&\textbf{0.043}&\textbf{0.043}\\%
eurocV102&\textbf{0.048}&\textbf{0.048}&\textbf{0.048}&\textbf{0.048}\\%
eurocV103&\textbf{0.058}&\textbf{0.058}&\textbf{0.058}&x\\%
eurocV201&\textbf{0.037}&\textbf{0.037}&\textbf{0.037}&\textbf{0.037}\\%
eurocV202&\textbf{0.053}&\textbf{0.053}&\textbf{0.053}&x\\%
tumvi{-}corr1&\textbf{0.300}&\textbf{0.300}&\textbf{0.300}&x\\%
tumvi{-}corr2&\textbf{0.426}&\textbf{0.426}&\textbf{0.426}&x\\%
tumvi{-}mag1&\textbf{1.456}&\textit{1.457}&\textbf{1.456}&x\\%
tumvi{-}mag2&\textbf{0.908}&\textit{0.920}&\textbf{0.908}&x\\%
tumvi{-}room1&\textbf{0.102}&\textbf{0.102}&\textbf{0.102}&\textit{0.104}\\%
tumvi{-}room2&\textbf{0.071}&\textbf{0.071}&\textbf{0.071}&x\\%
tumvi{-}slides1&\textbf{0.310}&\textbf{0.310}&\textbf{0.310}&x\\%
tumvi{-}slides2&\textbf{0.759}&\textbf{0.759}&\textbf{0.759}&x\\%
\bottomrule
\end{tabular}
\end{center}
\caption{
Absolute trajectory error (in meters) for VIO shows that in contrast to the baseline, the proposed approach also works with floating point precision providing essentially the same accuracy.
}
\label{tab:vio_ate}
\end{table}

\section{Evaluation}

We base our implementation on the open-source odometry \emph{Basalt},
which is a highly efficient state-of-the-art marginalizing sliding-window odometry with KLT feature tracking as frontend and SC-based optimization and marginalization in the backend ~\cite{usenko2019visual}.
In a minor adaptation for slightly increased performance, we marginalize a feature whose track has been lost right away, and not only together with its host frame.
The NS-projection for eliminating landmarks is inspired by \cite{demmel2021square}, but since during optimization we solve a small system with only up to 7 keyframes and a few hundred landmarks, we make several adjustments that improve runtime: 
instead of conjugate gradient to solve the reduced camera system (RCS), we explicitly form normal equations and solve with LDLT, and we skip Jacobian scaling and landmark damping. 
Another alternative for solving the RCS that avoids normal equations would be QR decomposition, which we found to have higher runtime.
We implement everything in one codebase, ensuring maximum comparability: the only difference between compared variants is the choice of prior storage (squared vs square root), and the algorithm for optimization and marginalization.

\begin{table}
\footnotesize
\setlength{\tabcolsep}{0.3em}
\begin{center}
\begin{tabular}[t]{lcccc}%
\toprule
&\sqrtvo[64]&\sqrtvo[32]&\sqvo[64]&\sqvo[32]\\%
\midrule%
kitti00&\textbf{3.92}&\textbf{3.92}&\textbf{3.92}&x\\%
kitti02&\textbf{9.72}&\textbf{9.72}&\textbf{9.72}&x\\%
kitti03&\textbf{1.34}&\textbf{1.34}&\textbf{1.34}&\textbf{1.34}\\%
kitti04&\textbf{1.22}&\textbf{1.22}&\textbf{1.22}&\textbf{1.22}\\%
kitti05&\textbf{2.75}&\textbf{2.75}&\textbf{2.75}&x\\%
kitti06&\textbf{2.61}&\textbf{2.61}&\textbf{2.61}&\textbf{2.61}\\%
kitti07&\textit{1.52}&1.53&\textit{1.52}&\textbf{1.44}\\%
kitti08&\textbf{3.85}&\textbf{3.85}&\textbf{3.85}&x\\%
kitti09&\textbf{4.13}&\textbf{4.13}&\textbf{4.13}&x\\%
kitti10&\textbf{1.11}&\textbf{1.11}&\textbf{1.11}&\textit{26.12}\\%
\bottomrule
\end{tabular}
\end{center}
\caption{
Absolute trajectory error (in meters) for VO shows the same tendency as VIO in Tab.~\ref{tab:vio_ate}.
}
\label{tab:vo_ate}
\end{table}

All variants make good use of multi-threading and use a state-of-the-art dense linear algebra library \cite{eigenweb}.
Floating point precision is indicated by suffixes, such as \sqrtvio[64] or \sqvo[32].
The experiments are run on an Ubuntu 18.04 desktop with 64GB RAM and an Intel Xeon W-2133 with 12 virtual cores at 3.60GHz.
VIO is evaluated on the EuRoC MAV dataset \cite{burri2016euroc} and a subset of TUMVI \cite{schubert2018tumvi}.
For the KITTI odometry benchmark (training set) \cite{geiger2012kitti} we evaluate VO, since it does not have synchronized IMU data (\emph{kitti01} is excluded since the optical flow of \cite{usenko2019visual} fails).
Additional results can be found in the Appendix.

\subsection{Accuracy and runtime}

We evaluate the accuracy of pose estimation with the Absolute Trajectory Error (ATE), the translational RMSE of camera positions after $\SE(3)$ alignment to the ground truth~ \cite{sturm2012tumrgbd}. Tab.~\ref{tab:vio_ate} and Tab.~\ref{tab:vo_ate} show the ATE for VIO and VO, respectively, both for the square root and the squared implementation.
It can be seen, that in double precision both variants result in the same accuracy, as does \sqrtvio[32], while \sqvio[32] often fails numerically or results in very high ATE (same for VO). Qualitatively, the SC-based single-precision estimators perform fine initially, but usually diverge soon (see Fig.~\ref{fig:teaser} \emph{top}).

The biggest portion of total runtime is spent on optimization. There, using NS-projection is 22\% faster in single precision compared to double, while for SC the speedup is only 8\%.
This is because for NS-projection we can do dense linear algebra operations on larger matrices, 
while an efficient SC implementation needs to exploit sparsity and operate explicitly on small matrix blocks. The larger matrix operations benefit more from SIMD instructions of modern CPUs.
In total runtime, the proposed \sqrtvio[32] is $36\%$ faster than the baseline \sqvio[64] (see Tab.~\ref{tab:vio_runtime}).

\begin{table}
\footnotesize
\setlength{\tabcolsep}{0.6em}
\begin{center}
\begin{tabular}[t]{lcccc}%
\toprule
&\sqrtvio[64]&\sqrtvio[32]&\sqvio[64]&\sqvio[32]\\%
\midrule%
eurocMH01&\textit{23.4}%
~/~%
2.5&\textbf{18.6}%
~/~%
2.3&35.9%
~/~%
\textit{1.8}&33.4%
~/~%
\textbf{1.7}\\%
eurocMH02&\textit{20.0}%
~/~%
2.1&\textbf{15.6}%
~/~%
1.9&31.7%
~/~%
\textit{1.5}&29.0%
~/~%
\textbf{1.4}\\%
eurocMH03&\textit{17.6}%
~/~%
1.8&\textbf{13.9}%
~/~%
\textit{1.6}&26.3%
~/~%
\textbf{1.3}&x\\%
eurocMH04&\textit{13.1}%
~/~%
1.3&\textbf{10.3}%
~/~%
\textit{1.2}&19.5%
~/~%
\textbf{0.9}&x\\%
eurocMH05&\textit{15.0}%
~/~%
1.5&\textbf{11.6}%
~/~%
\textit{1.3}&22.6%
~/~%
\textbf{1.1}&x\\%
eurocV101&\textit{15.0}%
~/~%
2.2&\textbf{12.0}%
~/~%
\textit{2.0}&23.6%
~/~%
\textbf{1.5}&22.4%
~/~%
\textbf{1.5}\\%
eurocV102&\textit{8.3}%
~/~%
1.0&\textbf{6.8}%
~/~%
\textit{0.9}&11.5%
~/~%
\textbf{0.7}&10.6%
~/~%
\textbf{0.7}\\%
eurocV103&\textit{8.3}%
~/~%
1.0&\textbf{6.7}%
~/~%
\textit{0.9}&11.1%
~/~%
\textbf{0.7}&x\\%
eurocV201&\textit{12.1}%
~/~%
\textit{1.4}&\textbf{9.5}%
~/~%
\textit{1.4}&20.8%
~/~%
\textbf{1.0}&19.2%
~/~%
\textbf{1.0}\\%
eurocV202&\textit{11.4}%
~/~%
1.3&\textbf{9.3}%
~/~%
\textit{1.2}&15.5%
~/~%
\textbf{0.9}&x\\%
tumvi{-}corr1&\textit{24.4}%
~/~%
3.2&\textbf{18.7}%
~/~%
\textit{2.6}&36.7%
~/~%
\textbf{2.2}&x\\%
tumvi{-}corr2&\textit{29.4}%
~/~%
3.8&\textbf{22.0}%
~/~%
\textit{3.1}&42.2%
~/~%
\textbf{2.6}&x\\%
tumvi{-}mag1&\textit{78.1}%
~/~%
10.5&\textbf{57.4}%
~/~%
\textit{8.4}&112.5%
~/~%
\textbf{7.0}&x\\%
tumvi{-}mag2&\textit{59.6}%
~/~%
7.7&\textbf{42.2}%
~/~%
\textit{6.3}&88.2%
~/~%
\textbf{5.1}&x\\%
tumvi{-}room1&\textit{13.2}%
~/~%
1.7&\textbf{10.0}%
~/~%
\textit{1.4}&21.6%
~/~%
\textbf{1.3}&19.6%
~/~%
\textbf{1.3}\\%
tumvi{-}room2&\textit{12.2}%
~/~%
1.8&\textbf{9.4}%
~/~%
\textit{1.5}&20.2%
~/~%
\textbf{1.3}&x\\%
tumvi{-}slides1&\textit{28.6}%
~/~%
3.6&\textbf{20.9}%
~/~%
\textit{3.0}&44.1%
~/~%
\textbf{2.5}&x\\%
tumvi{-}slides2&\textit{24.8}%
~/~%
3.1&\textbf{18.5}%
~/~%
\textit{2.5}&38.8%
~/~%
\textbf{2.1}&x\\%
\bottomrule
\end{tabular}
\end{center}
\caption{Total runtime in seconds spent on ``optimization /  marginalization'' in VIO. 
\emph{Optimization:} with NS-projection for landmarks (\sqrtvio[32]) is almost twice as fast as the baseline using SC (\sqvio[64]). 
\emph{Marginalization:} Conventional SC is a bit faster, but this step only takes a small fraction of the overall runtime. 
}
\label{tab:vio_runtime}
\end{table}

In an ablation study, we combine the square root prior form (\ref{eq:emargsqrt}) with different optimization (\emph{NS+LDLT} and \emph{SC+LDLT}) and marginalization variants (\emph{NS+QR} and \emph{SC+SC}). 
Here, SC-marginalization is always immediately followed by factorizing the prior into square root form with LDLT decomposition. We can see that this factorization alone is not enough to prevent the severe degradation of accuracy in single precision.
Only the combination of all proposed improvements leads to the best accuracy and runtime (see Tab.~\ref{tab:ablation}).

\begin{table}
\footnotesize
\setlength{\tabcolsep}{0.3em}
\begin{center}
\begin{tabular}{lccccccccc}%
\toprule
  & \multicolumn{2}{c}{proposed} && \multicolumn{6}{c}{ablation study} \\%
  \cmidrule{2-3} \cmidrule{5-10}
  opt. & \multicolumn{2}{c}{NS+LDLT} && \multicolumn{2}{c}{SC+LDLT} & \multicolumn{2}{c}{NS+LDLT} & \multicolumn{2}{c}{SC+LDLT}\\%
  marg. & \multicolumn{2}{c}{NS+QR} && \multicolumn{2}{c}{NS+QR} & \multicolumn{2}{c}{SC+SC} & \multicolumn{2}{c}{SC+SC} \\%
  precision & 64 & 32 && 64 & 32 & 64 & 32 & 64 & 32\\%
  \midrule%
ATE {[}m{]}&\textbf{0.068}&\textbf{0.068}&&\textbf{0.068}&\textbf{0.068}&\textbf{0.068}&0.232&\textbf{0.068}&\textit{0.211}\\%
real{-}time&6.9x&\textbf{8.2x}&&5.0x&5.6x&7.1x&\textit{7.9x}&5.2x&5.5x\\%
t total {[}s{]}&17.9&\textbf{14.9}&&24.4&21.8&17.4&\textit{15.5}&23.7&22.2\\%
t opt {[}s{]}&14.4&\textbf{11.4}&&22.2&20.3&14.4&\textit{11.5}&22.1&20.4\\%
t marg {[}s{]}&1.6&1.5&&1.6&\textit{1.3}&1.4&1.4&\textit{1.3}&\textbf{1.2}\\%
\bottomrule
\end{tabular}

\end{center}
\caption{Different combinations of optimization and marginalization techniques, and floating-point precision for \sqrtvio{} on EuRoC.
All variants store the marginalization prior in square root form (\ref{eq:emargsqrt}).
The shown metrics (ATE, runtime: total / optimization / marginalization) are averages over all sequences, 
and the real-time factor indicates how much faster the processing is compared to sequence duration.
The proposed square root marginalization \emph{NS+QR} is deciding for good accuracy in single precision, 
while the square root optimization \emph{NS+LDLT} leads to best runtime.
}
\label{tab:ablation}
\end{table}

\subsection{Numerical properties of marginalization prior}
\label{sec:numerical_properties}

Analytically, $H_\marg$ is positive semi-definite and has a nullspace equivalent to the non-linear system (which is one of the reasons we use first-estimates Jacobians, see Sec.~\ref{sec:sc}). 
This nullspace contains (at least) the dimensions corresponding to the global gauge freedom of the system: global translation and yaw for VIO, and additionally roll and pitch for VO. 
Note, that while for optimzation we add an absolute pose prior to fix the gauge, here we consider $H_\marg$ without such additional prior. 
We analyse $H_\marg$ numerically by looking at its smallest eigenvalue $\sigma_{\min}$ and at the change in prior cost $\Delta E_\marg$ for a state perturbation $\epsilon$ around the linearization point in the direction of the expected gauge freedom, with $||\epsilon|| = 1$:
\begin{equation}
    \Delta E_\marg = E_\marg(x^0 + \epsilon) - E_\marg(x^0) = \tfrac{1}{2}\epsilon^\top H_\marg \epsilon + \epsilon^\top b_\marg\,.
\end{equation}
Note, that eigenvalues and cost change are always computed after converting the prior to double, and for the square root estimator we compute $H_\marg = J_\marg^\top J_\marg$.
Fig.~\ref{fig:teaser} (\emph{bottom}) shows that with a squared formulation in single precision we get either negative eigenvalues with large magnitude (indefinite prior) or a large positive minimum eigenvalue (vanishing nullspace). 
Similarly, we observe in Fig.~\ref{fig:ns} that the gauge freedom appears to vanish.
The proposed \sqrtvio[32] and \sqrtvo[32] suffer from neither of these problems and are numerically stable and thus retain full accuracy.

\section{Conclusion}

We introduced a square root sliding-window bundle adjustment approach that is well suited for real-time visual and visual-inertial odometry applications.  The method combines elimination of landmark variables using nullspace projection with a matrix square root of the Hessian for storing the maginalization prior which is in turn directly updated using a specialized QR decomposition. We proved that the specialized QR decomposition is (analytically) equivalent to Schur complement. Yet, experimental evaluation on a range of real-world datasets reveals that the proposed approach is $23\%$ faster than the baseline.  Moreover, in contrast to the baseline approach, the proposed method remains numerically stable when run in single floating point precision, leading to a combined speedup of $36\%$ while preserving the same accuracy and robustness.

{\small
\bibliographystyle{ieee_fullname}
\bibliography{references}
}

\includecomment{commentappendix}

\begin{commentappendix}

\clearpage

\appendix

\section{Proofs and mathematical properties}

\subsection{Pseudo-Schur complement and SVD}

First, we show the properties we use in Section~\ref{sec:rank_deficient_jac} when proving that our proposed specialized QR decomposition is equivalent to using pseudo-Schur complement.
This includes in particular the definition of the compact SVD for a rank-deficient matrix together with a definition of the Moore-Penrose inverse, as well as a result on subspaces of $\mathbb{R}^n$ spanned by matrix columns.

\begin{defn}
Let $J\in\mathbb{R}^{n\times k}$ and $\rk(J)=r\leq k$.
The \emph{compact} singular value decomposition (SVD) of $J$ is of the form
\begin{equation}
    J = U_1 D_1 V_1^\top\,,
\end{equation}
where $U_1\in \mathbb{R}^{n\times r}$, $D_1\in \mathbb{R}^{r\times r}$, and $V_1\in \mathbb{R}^{k\times r}$.
$D_1$ is an invertible diagonal matrix with positive entries, $U_1^\top U_1=V_1^\top V_1=I_r$.
\end{defn}

Thus, by definition of the compact SVD, the columns of $U_1$ span the column space of $J$.
For the compact SVD of $J^\top J$, we get $V_1D_1^2V_1^\top$.

\begin{defn}
The \emph{Moore-Penrose inverse} (also \emph{pseudo-inverse}) of a matrix with compact SVD $U_1D_1V_1^\top$ is defined as
\begin{equation}
    (U_1D_1V_1^\top)^+ = V_1 D_1^{-1}U_1^\top\,.
\end{equation}
\end{defn}
Thus, the pseudo-inverse of $(J^\top J)$ is given by
\begin{equation}
    (J^\top J)^+ = V_1 D_1^{-2}V_1^\top\,.
\end{equation}

\begin{lemma}
Let $Q,U\in\mathbb{R}^{n\times r}$, and let the columns of $Q$ and $U$ span the same $r$-dimensional subspace of $\mathbb{R}^n$.
Further, let both $Q$ and $U$ have mutually orthogonal columns of norm 1, i.e., $Q^\top Q=U^\top U=I_r$.
Then, the following holds:
\begin{equation}
    QQ^\top = UU^\top\,.
\end{equation}
\end{lemma}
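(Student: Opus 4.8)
The plan is to recognize both $QQ^\top$ and $UU^\top$ as the orthogonal projector onto the common column space, and then exploit the fact that such a projector is unique. I would lead with a direct change-of-basis computation, which makes the argument self-contained and avoids invoking the uniqueness theorem explicitly.

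First, since the columns of $U$ all lie in the subspace spanned by the columns of $Q$, and the $r$ orthonormal columns of $Q$ form a basis of that $r$-dimensional subspace, each column of $U$ can be expressed uniquely in this basis. Collecting the resulting coordinate vectors as columns yields a matrix $M\in\mathbb{R}^{r\times r}$ with $U=QM$; concretely $M=Q^\top U$. Next I would verify that $M$ is orthogonal. Using the hypotheses $Q^\top Q=I_r$ and $U^\top U=I_r$,
\begin{equation}
I_r = U^\top U = M^\top Q^\top Q\, M = M^\top M\,,
\end{equation}
so $M$ has orthonormal columns; being square, it is an orthogonal matrix and hence also satisfies $MM^\top=I_r$.

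The claim then follows by substituting $U=QM$:
\begin{equation}
UU^\top = Q M M^\top Q^\top = Q\, I_r\, Q^\top = QQ^\top\,.
\end{equation}
The only step demanding care is the existence of the \emph{square} matrix $M$ with $U=QM$: this rests on the spanning hypothesis (the columns of $U$ lie in the range of $Q$) together with the fact that $Q$ has exactly $r$ columns forming a basis, so the coordinate matrix is $r\times r$ rather than larger, which is precisely what lets us conclude $M$ is orthogonal rather than merely isometric. Everything else is routine matrix algebra, and I expect no genuine obstacle.

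As an alternative phrasing, one could instead check that $P:=QQ^\top$ is symmetric and idempotent (indeed $P^2=Q(Q^\top Q)Q^\top=QQ^\top=P$) with range equal to the common subspace, hence the orthogonal projector onto it, and argue identically for $UU^\top$; uniqueness of the orthogonal projector onto a fixed subspace then gives equality. Either way, the lemma is essentially the statement that the orthogonal projector onto a subspace is independent of the choice of orthonormal basis, which is exactly why $Q_1Q_1^\top=U_1U_1^\top$ in the application of Section~\ref{sec:rank_deficient_jac}.
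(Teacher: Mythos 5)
Your proof is correct and follows essentially the same route as the paper's: both introduce the change-of-basis matrix $M = Q^\top U$ relating the two orthonormal bases, show $M$ is orthogonal using $Q^\top Q = U^\top U = I_r$, and conclude $UU^\top = QMM^\top Q^\top = QQ^\top$. Your version is in fact slightly tidier, since you obtain $MM^\top = I_r$ from $M^\top M = I_r$ and squareness of $M$, rather than implicitly assuming invertibility of $M$ as the paper does.
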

\begin{proof}
Since the columns of $Q$ and $U$ span the same space, each column of $Q$ can be written as a linear combination of the columns of $U$ and vice versa.
Thus, there is a matrix $M$ such that $Q^\top = MU^\top$ and $U^\top = M^{-1}Q^\top$.
As $Q^\top Q=U^\top U=I_r$, $M=Q^\top U$ and $M^{-1}=U^\top Q=M^\top$.
Thus, $M$ is orthogonal, yielding
\begin{equation}
    QQ^\top = UM^\top M U^\top = UU^\top\,.
\end{equation}
\end{proof}

\begin{figure*}[t]
\begin{center}
   \includegraphics[width=0.99\linewidth]{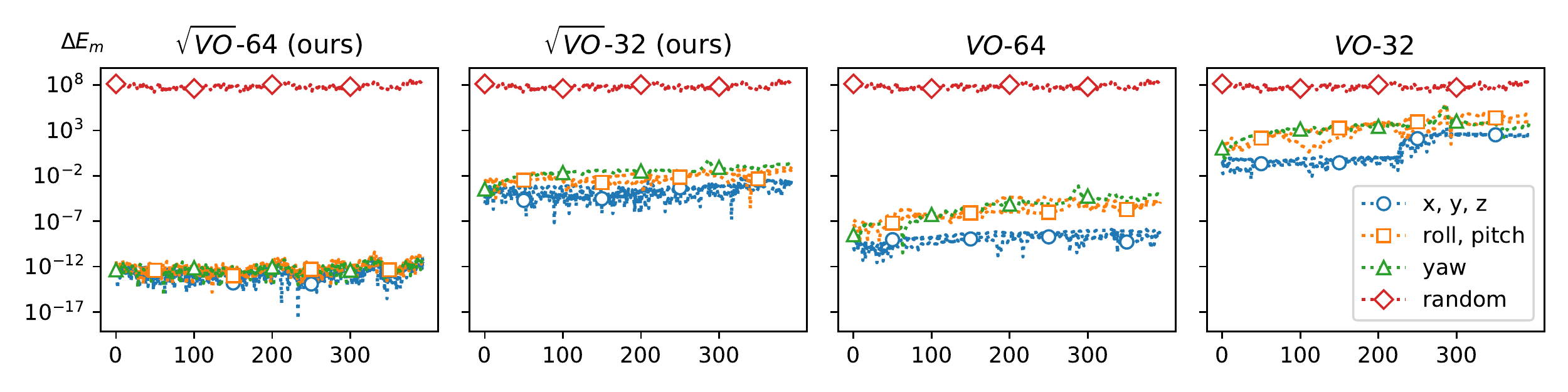}
\end{center}
\caption{
Where for visual-inertial odometry we expect 4 degrees of gauge freedom, for stereo visual odometry roll and pitch are not observable and we expect 6 degrees of gauge freedom.
The plots show the marginalization prior cost change $\Delta E_\marg$ for VO on \emph{kitti10} when perturbing the linearization point.
For that, we consider perturbations by a global translation (in x, y, or z), by a global (linearized) rotation (roll, pitch, or yaw), or by a random unit norm vector.
While our square root marginalization leads to a consistent prior with expected nullspaces for both single and double precision, in the conventional squared form accumulating error leads to inconsistency.
Similar to the VIO case (compare Fig.~\ref{fig:ns}), here for \sqvo[32] the prior over time erroneously appears to make the global pose observable, indicated by large cost change by perturbations in gauge direction.
In particular, after around 200 keyframes there is a noticeable increase, which also coincides with worsened pose estimation (see Fig.~\ref{fig:vo_trajectory}).
}
\label{fig:vo_ns}
\end{figure*}

\subsection{Equivalence of pseudo-inverse and pseudo-Schur complement}

In Sec.~\ref{sec:rank_deficiency}, we claim that under certain conditions, solving the full system using Moore-Penrose inverse is equivalent to using the generalized Schur complement followed by solving the reduced system with Moore-Penrose inverse. Moreover, a potential backsubstitution for the $\im$-variables can also be achieved using a Moore-Penrose inverse instead of an inverse: 
\begin{equation}
\label{eq:x_mu_red}
\Delta x_{\im,\text{red}} = H_{\im\im}^+(b_\im - H_{\im\ik}\Delta x_{\ik,\text{red}})\,.
\end{equation}
In the following, we will formalize and prove this statement.

\begin{thm}
Let \eqref{eq:lemma23} hold, and let $\Delta x_{\text{tot}}$ and $\Delta x_{\text{red}}$ be defined as in Sec.~\ref{sec:rank_deficiency}. Then,
\begin{equation}
    \Delta x_{\text{tot}} = \Delta x_\text{red}\,.
\end{equation}
\end{thm}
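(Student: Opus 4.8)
The plan is to recast both $\Delta x_{\text{tot}}$ and $\Delta x_{\text{red}}$ as minimum-norm least-squares solutions and then show they coincide by an orthogonality argument driven by the rank condition~\eqref{eq:lemma23}. Since $H=J^\top J$ and $b=J^\top r$, the normal equations $H\Delta x=-b$ are exactly those of $\min_{\Delta x}\|J\Delta x+r\|^2$, and the identity $(J^\top J)^+J^\top=J^+$ gives $\Delta x_{\text{tot}}=-H^+b=-J^+r$, i.e.\ the unique least-squares minimizer lying in $\operatorname{null}(J)^\perp$. So the whole theorem reduces to showing that the reduced construction also produces this particular minimizer.

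Next I would identify the reduced construction with sequential minimization. Minimizing $\|J\Delta x+r\|^2$ over $\Delta x_\im$ alone, taking the minimum-norm choice, yields $\Delta x_\im=-J_\im^+(J_\ik\Delta x_\ik+J_\iu\Delta x_\iu+r)$, which matches the back-substitution~\eqref{eq:x_mu_red} once one notes that $H_{\im\iu}=0$ forces $J_\im^\top J_\iu=0$, so the $\iu$-term drops. Substituting this optimum leaves the residual $P\,(J_\ik\Delta x_\ik+J_\iu\Delta x_\iu+r)$ with $P=I-Q_1Q_1^\top$ the orthogonal projector onto $\operatorname{col}(J_\im)^\perp$; using the appendix Lemma to replace $Q_1Q_1^\top$ by $U_1U_1^\top$ together with the computation in~\eqref{eq:pseudomarg}, the resulting normal equations are precisely the reduced system~\eqref{eq:reduced_system} with $\tilde H,\tilde b$ from~\eqref{eq:generalized_schur},~\eqref{eq:generalized_b}. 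Because sequential minimization of a jointly convex quadratic attains the global minimum, $\Delta x_{\text{red}}$ is a genuine least-squares minimizer of the full problem, and its $\im$-block lies in $\operatorname{range}(J_\im^\top)=\operatorname{null}(J_\im)^\perp$ since it is produced by a pseudo-inverse.

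The crux of the argument is the rank condition. Writing $J_{\ik\iu}=\begin{pmatrix}J_\ik & J_\iu\end{pmatrix}$, the hypothesis~\eqref{eq:lemma23} is equivalent to $\operatorname{col}(J_\im)\cap\operatorname{col}(J_{\ik\iu})=\{0\}$. From this I would derive two facts: first, $\operatorname{null}(J)$ has the product form $\operatorname{null}(J_\im)\times\operatorname{null}(J_{\ik\iu})$, since $J_\im d_\im+J_{\ik\iu}d_{\ik\iu}=0$ forces each summand into the trivial intersection; second, $\operatorname{null}(PJ_{\ik\iu})=\operatorname{null}(J_{\ik\iu})$, because $PJ_{\ik\iu}d=0$ means $J_{\ik\iu}d\in\operatorname{col}(J_\im)\cap\operatorname{col}(J_{\ik\iu})=\{0\}$. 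The second fact shows the reduced minimum-norm solution is orthogonal to $\operatorname{null}(J_{\ik\iu})$, and combined with the $\im$-block being orthogonal to $\operatorname{null}(J_\im)$ and the product form, $\Delta x_{\text{red}}\perp\operatorname{null}(J)$. A least-squares minimizer orthogonal to $\operatorname{null}(J)$ is necessarily the minimum-norm one, so $\Delta x_{\text{red}}=\Delta x_{\text{tot}}$.

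The hard part is exactly this decoupling step: in the rank-deficient setting, sequential minimum-norm selection need not agree with global minimum-norm selection, and it is only the non-coupling hypothesis~\eqref{eq:lemma23} that forces $\operatorname{null}(PJ_{\ik\iu})=\operatorname{null}(J_{\ik\iu})$ and hence the compatibility of the two orthogonality conditions. Everything else is routine projector algebra, and one could alternatively package the entire decoupling into a direct invocation of Lemma 2.3 of~\cite{liu2010generalized} on generalized Schur complements.
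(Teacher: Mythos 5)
Your proof is correct, but it follows a genuinely different route from the paper's. The paper works entirely at the Hessian level: it converts \eqref{eq:lemma23} into the rank-additivity condition $\rk(H)=\rk(H_{\im\im})+r_{\ik\iu}$, invokes Lemma~2.3 of~\cite{liu2010generalized} to obtain an explicit $2\times 2$ block formula for $H^+$ whose lower-right block is the pseudo-inverse of the reduced system matrix, and then checks by block multiplication that the $(\ik,\iu)$- and $\im$-components of $-H^+b$ reproduce the reduced solution and the back-substitution formula. You instead work at the Jacobian level: both quantities are recast as least-squares minimizers via $(J^\top J)^+J^\top=J^+$, the reduced construction is identified with sequential minimization through the projector $P=I-Q_1Q_1^\top$, and \eqref{eq:lemma23} enters geometrically, as triviality of $\operatorname{col}(J_\im)\cap\operatorname{col}\begin{pmatrix}J_\ik & J_\iu\end{pmatrix}$, which gives both the product structure of $\operatorname{null}(J)$ and the identity $\operatorname{null}\bigl(P\begin{pmatrix}J_\ik & J_\iu\end{pmatrix}\bigr)=\operatorname{null}\begin{pmatrix}J_\ik & J_\iu\end{pmatrix}$, so that the blockwise minimum-norm selections are jointly orthogonal to $\operatorname{null}(J)$ and must coincide with the global minimum-norm solution. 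Your route buys self-containedness (no external block-pseudo-inverse lemma, only standard projector and pseudo-inverse facts), makes transparent exactly where the hypothesis is needed (without it, sequential min-norm selection can differ from global min-norm selection), and stays in the square-root picture ($J$, $Q_1$, $P$) that the paper otherwise advocates; the paper's route is shorter given the cited lemma and yields the explicit block form of $H^+$ as a by-product. One detail worth flagging: your back-substitution $\Delta x_\im=-H_{\im\im}^+(b_\im+H_{\im\ik}\Delta x_\ik)$ is the sign-consistent one under the convention $H\Delta x=-b$; it agrees with \eqref{eq:x_mu_red} only up to the sign of the $b_\im$ term, which appears to be a typo in the paper rather than a flaw in your argument.
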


\begin{proof}
We start by noting that
\begin{align}
    \rk(J_\im) &= \rk(H_{\im\im})\,,\\
    \rk\begin{pmatrix}J_\ik & J_\iu\end{pmatrix} &=
    \rk\begin{pmatrix}H_{\ik\ik} & H_{\ik\iu} \\ H_{\iu\ik} & H_{\iu\iu} \end{pmatrix} =: r_{\ik\iu}\,, \\
    \rk(J) &= \rk(H)\,.
\end{align}
Thus, we can rewrite \eqref{eq:lemma23} as
\begin{equation}
    \rk(H) = \rk(H_{\im\im}) + r_{\ik\iu}\,,
\end{equation}
and apply Lemma 2.3 from~\cite{liu2010generalized} with $A_{11} = H_{\im\im}$.
This Lemma gives us a block-matrix expression for the pseudo-inverse $H^+$ of $H$:
\begin{align}
    H^+ &= \begin{pmatrix}
    A & -B^\top \\ -B & S^+
    \end{pmatrix}\,, \\
    \label{eq:fullSchur}
    S &= \begin{pmatrix}\tilde{H}+H_{\ik\ik}^{\bar{\im}} & H_{\ik\iu} \\
    H_{\iu\ik} & H_{\iu\iu}\end{pmatrix}\,, \\
    B &= S^+\begin{pmatrix}H_{\ik\im} \\ 0\end{pmatrix}
    H_{\im\im}^+\,, \\
    A &= H_{\im\im}^+ + H_{\im\im}^+
    \begin{pmatrix}H_{\im\ik} & 0\end{pmatrix}
    B\,.
\end{align}
If we now compute $-H^+b$ and look at the $\ik$- and $\iu$-components, we get
\begin{equation}
\label{eq:proof_ku}
\begin{aligned}
    &\begin{pmatrix}
    \Delta x_{\ik,\text{tot}} \\
    \Delta x_{\iu,\text{tot}}
    \end{pmatrix} =
    Bb_\im - S^+ \begin{pmatrix}b_\ik \\ b_\iu\end{pmatrix} \\
    &= S^+\begin{pmatrix}H_{\ik\im}H_{\im\im}^+b_\im - b_\ik \\ -b_\iu
    \end{pmatrix}
    = -S^+\begin{pmatrix}\tilde{b} + b_\ik^{\bar{\im}} \\ b_\iu
    \end{pmatrix}\,,
\end{aligned}
\end{equation}
which is exactly the solution of \eqref{eq:reduced_system}, i.e.,
\begin{equation}
\label{eq:proof_ku2}
    \begin{pmatrix}
    \Delta x_{\ik,\text{tot}} \\
    \Delta x_{\iu,\text{tot}}
    \end{pmatrix} = 
    \begin{pmatrix}
    \Delta x_{\ik,\text{red}} \\
    \Delta x_{\iu,\text{red}}
    \end{pmatrix}
\end{equation}
Similarly, from
\begin{equation}
    \Delta x_{\im,\text{tot}} = -Ab_\im + B^\top\begin{pmatrix}b_\ik \\ b_\iu \end{pmatrix}\,,
\end{equation}
after some steps, one obtains the back substitution formula \eqref{eq:x_mu_red}
\begin{equation}
\label{eq:proof_m}
    \Delta x_{\im,\text{tot}} = H_{\im\im}^+(b_\im - H_{\im\ik}\Delta x_{\ik,\text{red}}) = \Delta x_{\im,\text{red}}\,.
\end{equation}
\eqref{eq:proof_ku2} and \eqref{eq:proof_m} together conclude the proof.
\end{proof}

\paragraph{Note on square root of the $\ik\iu$-system}

While we have shown that $\tilde{H}=\tilde{R}^\top\tilde{R}$ and $\tilde{b}=\tilde{R}^\top\tilde{r}$, to complete the square root formulation, a square root of the system including $u$-variables as in \eqref{eq:reduced_system} and \eqref{eq:fullSchur} is given by 
\begin{align}
    \tilde{R}_{\ik\iu} = \begin{pmatrix}
        J_{\ik}^{\bar{\im}} & J_\iu^{\bar{\im}} \\
        \tilde{R} & 0
    \end{pmatrix}\,,\qquad
    \tilde{r}_{\ik\iu} = \begin{pmatrix}
        r^{\bar{\im}} \\
        \tilde{r}
    \end{pmatrix}\,.
\end{align}

\begin{table}
\footnotesize
\setlength{\tabcolsep}{0.6em}
\begin{center}
\begin{tabular}[t]{lcccc}%
\toprule
&\sqrtvo[64]&\sqrtvo[32]&\sqvo[64]&\sqvo[32]\\%
\midrule%
kitti00&\textit{29.5}%
~/~%
2.7&\textbf{23.6}%
~/~%
\textbf{2.2}&50.2%
~/~%
\textit{2.3}&x\\%
kitti02&\textit{32.0}%
~/~%
3.0&\textbf{25.0}%
~/~%
\textbf{2.3}&53.2%
~/~%
\textit{2.4}&x\\%
kitti03&\textit{5.2}%
~/~%
\textit{0.6}&\textbf{4.3}%
~/~%
\textbf{0.5}&9.4%
~/~%
\textbf{0.5}&9.0%
~/~%
\textbf{0.5}\\%
kitti04&\textit{1.5}%
~/~%
\textit{0.2}&\textbf{1.2}%
~/~%
\textbf{0.1}&2.6%
~/~%
\textbf{0.1}&2.5%
~/~%
\textbf{0.1}\\%
kitti05&\textit{18.0}%
~/~%
1.7&\textbf{15.0}%
~/~%
\textbf{1.4}&31.1%
~/~%
\textit{1.5}&x\\%
kitti06&\textit{5.8}%
~/~%
\textit{0.6}&\textbf{4.8}%
~/~%
\textbf{0.5}&9.8%
~/~%
\textit{0.6}&9.3%
~/~%
\textit{0.6}\\%
kitti07&\textit{6.3}%
~/~%
\textit{0.7}&\textbf{5.3}%
~/~%
\textbf{0.6}&11.2%
~/~%
\textbf{0.6}&10.7%
~/~%
\textbf{0.6}\\%
kitti08&\textit{26.3}%
~/~%
2.5&\textbf{21.2}%
~/~%
\textbf{2.0}&44.2%
~/~%
\textit{2.1}&x\\%
kitti09&\textit{10.1}%
~/~%
\textit{1.0}&\textbf{8.0}%
~/~%
\textbf{0.8}&16.7%
~/~%
\textbf{0.8}&x\\%
kitti10&\textit{6.9}%
~/~%
\textit{0.7}&\textbf{5.5}%
~/~%
\textbf{0.6}&11.6%
~/~%
\textbf{0.6}&9.7%
~/~%
\textbf{0.6}\\%
\bottomrule
\end{tabular}

\end{center}
\caption{
Total runtime in seconds spent on ``optimization /  marginalization'' in VO. 
\emph{Optimization:} NS-projection for landmarks (\sqrtvo[32]) is almost twice as fast as the baseline using SC (\sqvo[64]). 
\emph{Marginalization:} conventional SC may be slightly faster, but this step only takes a small fraction of the overall runtime. 
}
\label{tab:vo_runtime}
\end{table}

\section{Additional analysis of VO results}

\begin{table}
\footnotesize
\setlength{\tabcolsep}{0.3em}
\begin{center}
\begin{tabular}{lccccccccc}%
  \toprule
    & \multicolumn{2}{c}{proposed} && \multicolumn{6}{c}{ablation study} \\%
  \cmidrule{2-3} \cmidrule{5-10}
  opt. & \multicolumn{2}{c}{NS+LDLT} && \multicolumn{2}{c}{SC+LDLT} & \multicolumn{2}{c}{NS+LDLT} & \multicolumn{2}{c}{SC+LDLT}\\%
  marg. & \multicolumn{2}{c}{NS+QR} && \multicolumn{2}{c}{NS+QR} & \multicolumn{2}{c}{SC+SC} & \multicolumn{2}{c}{SC+SC} \\%
  precision & 64 & 32 && 64 & 32 & 64 & 32 & 64 & 32\\%
  \midrule  
ATE {[}m{]}&\textbf{3.216}&\textbf{3.216}&&\textbf{3.216}&\textbf{3.216}&\textit{3.217}&3.293&\textbf{3.216}&3.479\\%
real{-}time&9.4x&\textbf{9.8x}&&8.0x&8.6x&9.4x&\textit{9.6x}&8.2x&8.6x\\%
t total {[}s{]}&24.3&\textit{23.3}&&28.7&26.5&24.1&\textbf{23.2}&28.2&26.5\\%
t opt {[}s{]}&14.2&\textit{11.3}&&24.2&22.6&14.1&\textbf{11.0}&24.2&22.2\\%
t marg {[}s{]}&1.4&\textit{1.1}&&1.2&\textbf{1.0}&1.4&1.3&1.2&1.2\\%
\bottomrule
\end{tabular}

\end{center}
\caption{
Different combinations of optimization and marginalization techniques, and floating-point precision for \sqrtvo{} on KITTI. 
All variants store the marginalization prior in square root form (\ref{eq:emargsqrt}).
The shown metrics (ATE, runtime: total / optimization / marginalization) are averages over all sequences, 
and the real-time factor indicates how much faster the processing is compared to sequence duration.
The proposed square root marginalization \emph{NS+QR} is deciding for good accuracy in single precision, 
while the square root optimization \emph{NS+LDLT} leads to best runtime.
}
\label{tab:vo_ablation}
\end{table}

\begin{figure}[t]
\begin{center}
   \includegraphics[width=0.99\linewidth]{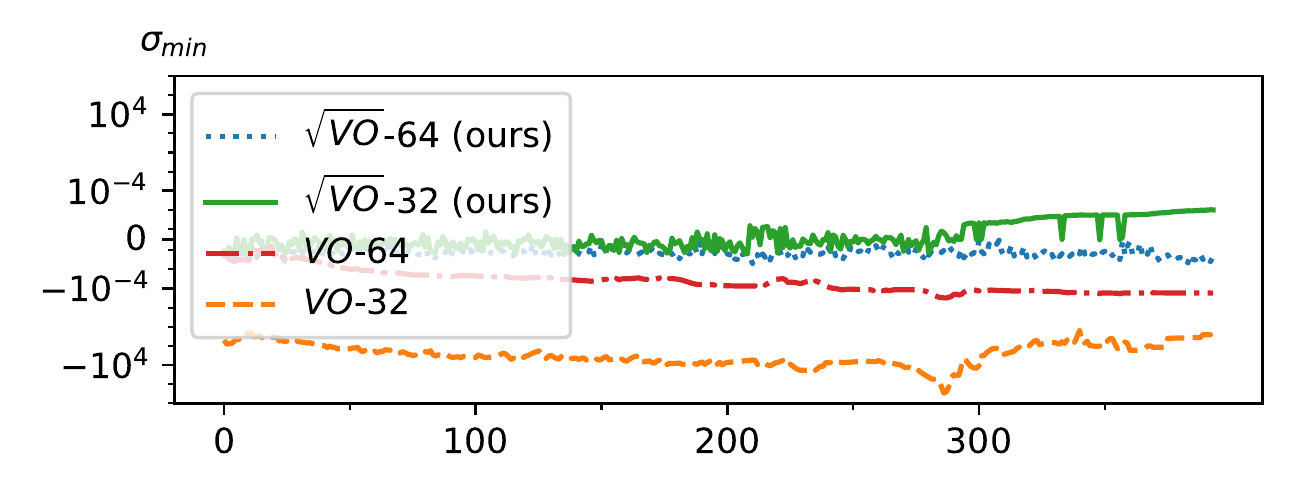}
\end{center}
\caption{
Smallest eigenvalue $\sigma_{\min}$ of the marginalization prior Hessian $H_\marg$ evolving over time for VO on \emph{kitti10} (linear y-axis for  $|\sigma_{\min}| < 10^{-8}$, logarithmic elsewhere).
We expect values close to zero (positive semi-definite Hessian with gauge freedom). While the conventional (squared) formulation in single precision leads to negative eigenvalues with large magnitude (exceeding $10^{8}$), accumulating error, and (ultimately) numerical failure, the proposed square root approach has $\sigma_{\min}$ of bounded magnitude (less than $10^{-4}$) and remains stable.
}
\label{fig:vo_ev}
\end{figure}

In this section we include additional results supporting the claims of the main paper. 
While these are for the same datasets, we expand upon some of the analysis that was omitted due to limited space.
Specifically, we show runtimes, the ablation study, as well as the nullspace and eigenvalue analysis also for VO on the KITTI dataset.
Qualitatively, these are similar to the VIO results from the main paper and thus we draw the same conclusions.

Tab.~\ref{tab:vo_runtime} shows runtimes for optimization and marginalization for VO (compare VIO results in Tab.~\ref{tab:vio_runtime}). It can be seen that optimization takes a much bigger portion of total runtime than marginalization, that for the proposed single-precision solver \sqrtvo[32] it is around twice as fast as the competing baseline \sqvo[64], and that the square root formulation benefits more in terms of runtime from switching from double to single precision.

Tab.~\ref{tab:vo_ablation} shows the same ablation study as Tab.~\ref{tab:ablation}, but for VO instead of VIO.
Note that for KITTI, the twofold improvement in optimization runtime is not fully reflected in an improvement of total runtime.
The reason is that here the optical flow, which is computed in a single parallel thread, becomes the bottleneck.
However, the improved optimization runtime still means the required compute power is reduced.
Overall, also for VO we conclude that only the combination of all proposed improvements leads to best accuracy and runtime.

The analysis of numerical properties of the marginalization prior Hessian of VO on \emph{kitti10} reveals similar behaviour to VIO (see Sec.~\ref{sec:numerical_properties}).
For the squared formulation in single precision the marginalization prior becomes numerically indefinite (Fig.~\ref{fig:vo_ev}, compare VIO results in Fig.~\ref{fig:teaser} \emph{bottom}) and gauge freedom vanishes (Fig.~\ref{fig:vo_ns}, compare VIO results in Fig.~\ref{fig:ns}).
While initially the pose estimation works fine, at some point the accumulating error leads to bad state estimates and ultimately numerical failure (Fig.~\ref{fig:vo_trajectory}, compare Fig.~\ref{fig:teaser} \emph{top}).
In contrast, the proposed \sqrtvo\ has the same accuracy in both single and double precision, at a significantly reduced computational cost.

\begin{figure}[t]
\begin{center}
   \includegraphics[width=0.99\linewidth]{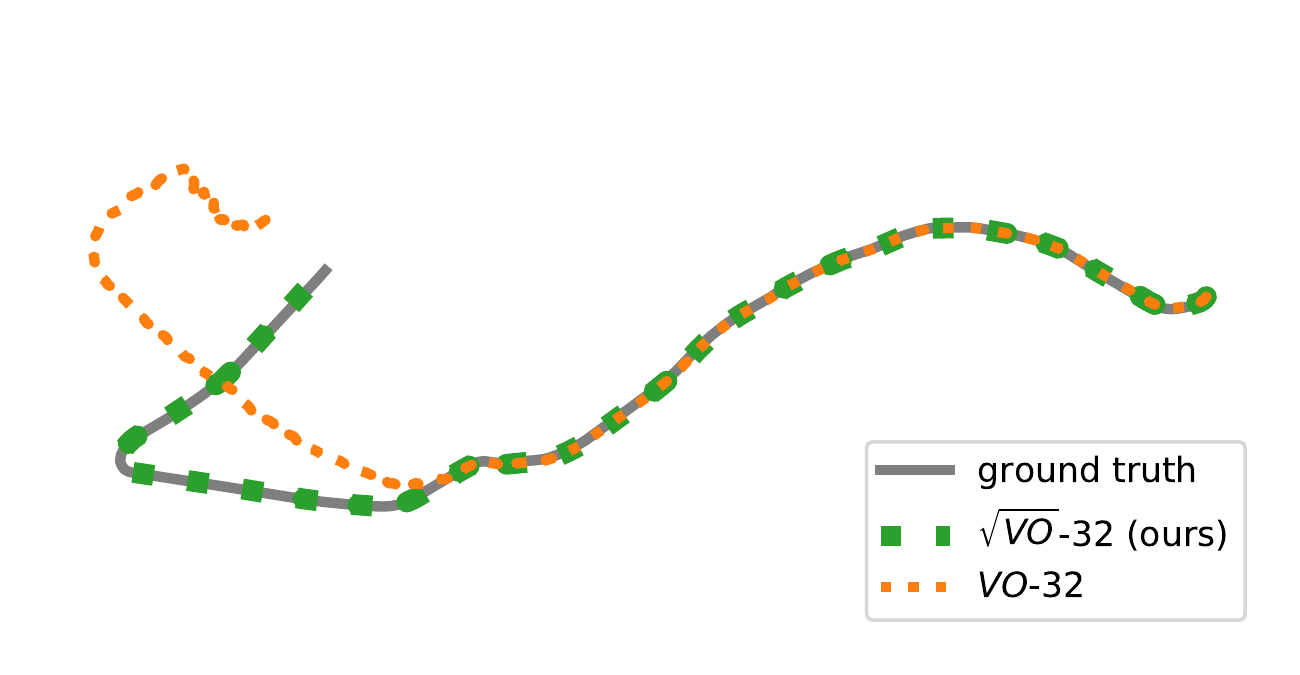}
\end{center}
\caption{
Estimated visual odometry trajectories on the \emph{kitti10} sequence. 
The conventional baseline \sqvo[64] works well with double-precision floats, but fails in single precision (\sqvo[32]). 
In contrast, the proposed square root estimator \sqrtvo[32] even in single precision retains the same accuracy.
}
\label{fig:vo_trajectory}
\end{figure}

\section{Notes on memory overhead}
The main memory requirement of our optimization and marginalization 
comes from the dense landmark blocks, where we perform
QR on the Jacobians in-place to marginalize landmarks. 
\cite{demmel2021square} reports around twice the memory use compared to SC for sparse BA problems 
and mentions memory to be the limiting factor for large dense problems. 
However, for us the number of keyframes and number of observations per landmark are bounded in the sliding window and thus memory use is not a major concern.

For example, for VIO on Euroc MH01 we have at most 4033 observations across all landmarks, and at most 7 keyframes (3 with IMU, state size 15, and 4 pose-only, state size 6), so the Jacobians have in total 8066 rows and 73 columns (3+1 extra for landmark+residual), giving an approximate upper bound of $2.4\textrm{MB}$ with 32bit floats. Measuring the actual difference in peak memory between the single and double precision variants reveals $1.3\textrm{MB}$ for the square root solver and $0.9\textrm{MB}$ for the SC solver, while the vast majority of process peak memory at around $300\textrm{MB}$ is spent in other parts of the (not memory-optimized) system (e.g. cached image queue, logging, etc...).

A memory-conscious implementation could in fact reduce the required landmark-block memory by doing a one-pass over landmarks that linearizes, marginalizes and accumulates the RCS Hessian using scratch memory. Only 3 rows per landmark for back-substitution would need to be stored.

\end{commentappendix}

\end{document}